\newtheorem{theorem}{Theorem}[section]
\newtheorem{proposition}{Proposition}[section]
\newtheorem{example}{Example}[section]
\def\N{\mathbb{N}}
\def\R{\mathbb{R}}
\def\G{\mathcal{G}}
\def\e{\varepsilon}
\def\1{\mathds{1}}
\def\m{\mathrm{m}}
\def\ss{\mathrm{ss}}
\def\EW{\mathrm{E}_\mathrm{Wards}}
\def\sWards{{\sc sWards}}
\def\sCEC{{\sc sCEC}}
\def\CEC{{\sc CEC}}
\def\ESW{\mathrm{E}_\mathrm{sWards}}
\def\tr{\mathrm{tr}}
\newcommand\D[1]{D\langle #1 \rangle}
\begin{document}
%
% paper title
% can use linebreaks \\ within to get better formatting as desired
\title{Spherical Wards clustering and generalized Voronoi diagrams}

% author names and affiliations
% use a multiple column layout for up to three different
% affiliations

\author{\IEEEauthorblockN{Marek \'Smieja}
\IEEEauthorblockA{Faculty of Mathematics and Computer Science\\Jagiellonian University\\
            Lojasiewicza 6, 30-348 Krakow, Poland\\
Email: marek.smieja@ii.uj.edu.pl}
\and
\IEEEauthorblockN{Jacek Tabor}
\IEEEauthorblockA{Faculty of Mathematics and Computer Science\\Jagiellonian University\\
            Lojasiewicza 6, 30-348 Krakow, Poland\\
Email: jacek.tabor@ii.uj.edu.pl}}

%\author{\IEEEauthorblockN{Author 1}
%\IEEEauthorblockA{Affiliation 1}
%\and
%\IEEEauthorblockN{Author 2}
%\IEEEauthorblockA{Affiliation 2}}

% conference papers do not typically use \thanks and this command
% is locked out in conference mode. If really needed, such as for
% the acknowledgment of grants, issue a \IEEEoverridecommandlockouts
% after \documentclass

% for over three affiliations, or if they all won't fit within the width
% of the page, use this alternative format:
% 
%\author{\IEEEauthorblockN{Michael Shell\IEEEauthorrefmark{1},
%Homer Simpson\IEEEauthorrefmark{2},
%James Kirk\IEEEauthorrefmark{3}, 
%Montgomery Scott\IEEEauthorrefmark{3} and
%Eldon Tyrell\IEEEauthorrefmark{4}}
%\IEEEauthorblockA{\IEEEauthorrefmark{1}School of Electrical and Computer Engineering\\
%Georgia Institute of Technology,
%Atlanta, Georgia 30332--0250\\ Email: see http://www.michaelshell.org/contact.html}
%\IEEEauthorblockA{\IEEEauthorrefmark{2}Twentieth Century Fox, Springfield, USA\\
%Email: homer@thesimpsons.com}
%\IEEEauthorblockA{\IEEEauthorrefmark{3}Starfleet Academy, San Francisco, California 96678-2391\\
%Telephone: (800) 555--1212, Fax: (888) 555--1212}
%\IEEEauthorblockA{\IEEEauthorrefmark{4}Tyrell Inc., 123 Replicant Street, Los Angeles, California 90210--4321}}

% use for special paper notices
%\IEEEspecialpapernotice{(Invited Paper)}

\IEEEoverridecommandlockouts
\IEEEpubid{\makebox[\columnwidth]{Copyright notice: 978-1-4673-8273-1/15/\$31.00~\copyright2015 IEEE \hfill} \hspace{\columnsep}\makebox[\columnwidth]{ }}

%\IEEEpubid{\makebox[\columnwidth]{\hfill 9781-4244-3941-6/09/\$25.00~\copyright~2009 IEEE}}
%\hspace{\columnsep}\makebox[\columnwidth]{Published by the IEEE Computer Society}}

% make the title area
\maketitle

\begin{abstract}
Gaussian mixture model is very useful in many practical problems. Nevertheless, it cannot be directly generalized to non Euclidean spaces. To overcome this problem we present a spherical Gaussian-based clustering approach for partitioning data sets with respect to arbitrary dissimilarity measure. The proposed method is a combination of spherical Cross-Entropy Clustering with a generalized Wards approach. The algorithm finds the optimal number of clusters by automatically removing groups which carry no information. Moreover, it is scale invariant and allows for forming of spherically-shaped clusters of arbitrary sizes. In order to graphically represent and interpret the results the notion of Voronoi diagram was generalized to non Euclidean spaces and applied for introduced clustering method.
\end{abstract}
% IEEEtran.cls defaults to using nonbold math in the Abstract.
% This preserves the distinction between vectors and scalars. However,
% if the conference you are submitting to favors bold math in the abstract,
% then you can use LaTeX's standard command \boldmath at the very start
% of the abstract to achieve this. Many IEEE journals/conferences frown on
% math in the abstract anyway.

% no keywords

% For peer review papers, you can put extra information on the cover
% page as needed:
% \ifCLASSOPTIONpeerreview
% \begin{center} \bfseries EDICS Category: 3-BBND \end{center}
% \fi
%
% For peerreview papers, this IEEEtran command inserts a page break and
% creates the second title. It will be ignored for other modes.
\IEEEpeerreviewmaketitle

\section{Introduction}

Distribution-based clustering, such as Gaussian mixture model (GMM), has been proven to be very useful in many practical problems \cite{mclachlan2007algorithm}. This technique has been widely applied in object detection \cite{figueiredo2002unsupervised}, learning and modeling \cite{samuelsson2004waveform}, feature selection \cite{xing2001feature} or classification \cite{povinelli2004time}. The constructed groups are described by optimally fitted probability distributions. Nevertheless, this kind of methods is limited for the case of Euclidean spaces and the clustering of data with respect to Gaussian-like probability distributions in arbitrary data spaces where only distance or (dis)similarity measure is provided still remains a challenge. 

In this paper we show how to partially overcome this problem and propose a spherical Wards clustering ({\sWards}) which divides data sets with respect to arbitrary dissimilarity measure into groups described by spherical Gaussian-like distributions. Figure \ref{fig:diag} shows the relationship between {\sWards} and related methods. Moreover, we extend the notion of Voronoi diagram to the case of arbitrary criterion function in non Euclidean spaces and apply it for {\sWards} clustering. 

Introduced method permits an informal interpretation of the notion of spherical Gaussian probability distribution in non Euclidean spaces. The algorithm is capable of discovering spherically-shaped groups of arbitrary sizes (see Example \ref{unbalancedData}). Moreover the clustering results are invariant with respect to the scaling of data (see Example \ref{scaleInvariance}). In fact, data sets with unbalanced groups appear very often in practice, e.g in chemoinformatics where finding of chemical compounds acting on specific disease is rare \cite{gasteiger2003handbook, dixon1999investigation} or in Natural Language Processing where the numbers of documents that belong to particular domains are different \cite{zamir1998web}. Our method can be successfully applied in discovering of populations districts in biological systems modeled by a random walk procedure (see Examples \ref{biloEx1}, \ref{biloEx2}). The method is easy to implement and has the same numerical complexity as the k-means version adapted to non Euclidean spaces \cite{batagelj1988generalized}. Moreover, our algorithm automatically finds the resultant number of groups by reducing unnecessary clusters on-line. Voronoi diagrams for {\sWards}, k-means and their kernelized versions for a mouse-like set with non Euclidean distance function are presented in Figure~\ref{fig:mouse}.

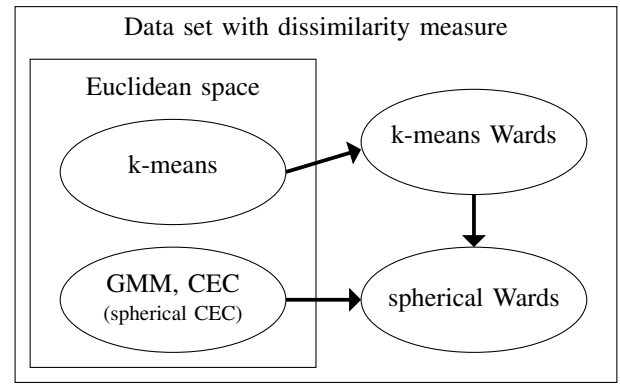
\begin{figure}[t]
\centering
\begin{tikzpicture} 
\node(title) at (4,4.7){Data set with dissimilarity measure};
\node(subtitle) at (2.1,3.9){Euclidean space};
\draw (0,0) rectangle (8,5);
\draw (0.2,0.2) rectangle (4,4.3);
\node(km) at (2.1,2.9){k-means};
\node(GMM) at (2.1,1.3){GMM, CEC};
\node(GMM) at (2.1,0.9){\footnotesize (spherical CEC)};
\draw (3.6,2.8) arc (0:360:1.5cm and 0.7cm);
\draw (3.6,1.1) arc (0:360:1.5cm and 0.7cm);
\node(wards) at (6.1,3.3){k-means Wards };
\node(swards) at (6.1,1.1){spherical Wards};
\draw (7.6,3.2) arc (0:360:1.5cm and 0.7cm);
\draw (7.6,1.1) arc (0:360:1.5cm and 0.7cm);
%\draw [->] (3.6,2.8) -- ++(1.0,0.3);
%\draw [->] (3.6,1.1) -- ++(1.0,0);
%\draw [->] (6.1,2.5) -- ++(0,-0.7);
    \draw[
        -triangle 90,
        line width=0.2mm,
        postaction={draw, line width=0.05cm, shorten >=0.1cm, -}
    ] (3.6,2.8) -- (4.6,3.1);
    \draw[
        -triangle 90,
        line width=0.2mm,
        postaction={draw, line width=0.05cm, shorten >=0.1cm, -}
    ] (3.6,1.1) -- (4.6,1.1);
    \draw[
        -triangle 90,
        line width=0.2mm,
        postaction={draw, line width=0.05cm, shorten >=0.1cm, -}
    ] (6.1,2.5) -- (6.1,1.8);
\end{tikzpicture}
	\caption{{\bf Spherical Wards clustering.} The relationship of our method with GMM, k-means and Wards clustering.}
	\label{fig:diag} 
\end{figure}

\begin{figure*}[t]
\centering
\subfigure[Wards k-means clustering with $k=4$.]{\includegraphics[width=1.65in]{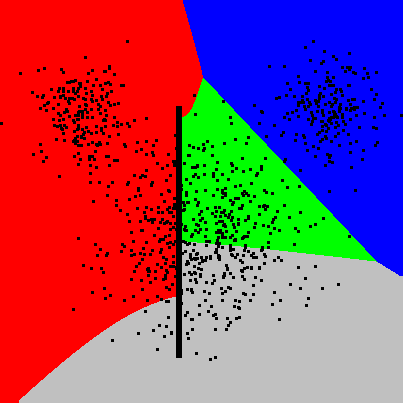}} \quad
\subfigure[{\sWards} clustering started with 10 initial clusters.]{\includegraphics[width=1.65in]{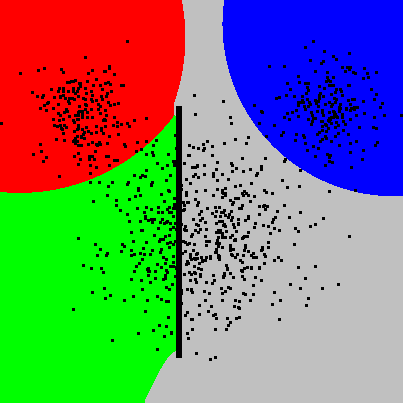}} \quad
\subfigure[Wards k-means with RBF dissimilarity function and $k=4$.]{\includegraphics[width=1.65in]{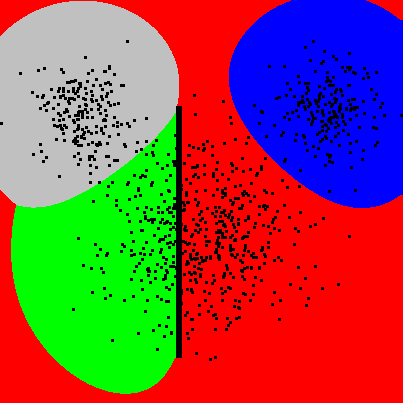}} \quad
\subfigure[{\sWards} clustering with RBF dissimilarity function started with 10 initial clusters.]{\includegraphics[width=1.65in]{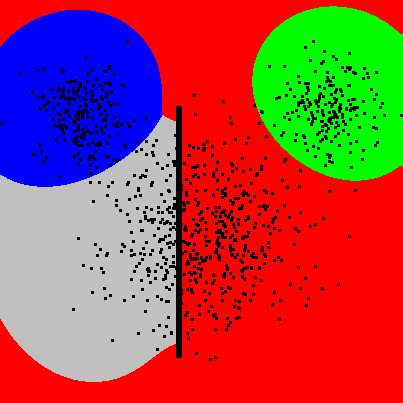}} \quad
	\caption{{\bf Voronoi diagrams.} The Voronoi diagrams of introduced {\sWards} compared with Wards k-means on mouse-like set with barriers with two types of similarity measures: Euclidean and RBF similarity. The barrier changes the distance between elements. The distance between elements located on the opposite sides of barrier is calculated as a length of the shortest path which does not cross the barrier.  Observe that despite the barrier {\sWards} method discovered ``mouse ears'' as a spherical clusters while ``mouse head'' was divided into two smaller groups. The Wards k-means results do not have so intuitive explanation. Kernelized versions of both algorithms gave the satisfactory effects, but the main difficulty lies in finding the appropriate values of RBF parameter. An important thing is that the introduced {\sWards} technique produces comparable partition without the need of parameters tuning.}
	\label{fig:mouse} 
\end{figure*}

Proposed {\sWards} method is a combination of spherical variant of Cross-Entropy Clustering ({\CEC}) \cite{tabor2013} with the generalized Wards approach \cite{batagelj1988generalized, spath1975cluster}. Generally, spherical {\CEC} describes clusters by optimally fitted spherical Gaussian distributions while Wards method allows for its adaptation to non Euclidean case. Spherical {\CEC} performs a clustering by optimizing a cross-entropy criterion function \eqref{spher1}. Its form is very flexible since it is based on the within clusters sums of squares, the cardinalities of clusters and the dimension of space. 

\IEEEpubidadjcol

Applied Wards approach allows for a generalization of the notion of within cluster sum of squares for the case of any dissimilarity measure \cite{batagelj1988generalized, spath1975cluster}. The key lies in the observation that this quantity can be rewritten in Euclidean space without the use of a mean $\m_Y$ of a cluster $Y$ in the form:
$$
\sum_{y \in Y}d^2(y,\m_Y) =\frac{1}{2|Y|}\sum_{y, z \in Y}d^2(y,z). 
$$
On the other hand, note that a dimension in arbitrary space does not have to be defined. Therefore, to adapt spherical CEC criterion function to general case we recommend to estimate its value from data with use of Maximum Likelihood Estimator of intrinsic dimension \cite{maxdim, comments}. 

To graphically represent and interpret the results of clustering the notion of Voronoi diagram is widely applied. Its construction requires the answer for the question: to which cluster we should associate an arbitrary unclustered point? In the case of classical k-means the answer is simple: we assign the point to the cluster with the nearest center. In the Wards method we replace it by a generalization of distance of point $x$ from the center of cluster $Y$ given by \cite{batagelj1988generalized}
\begin{equation} \label{voronoi}
d^2(x;Y):=\frac{1}{|Y|}\sum_{y \in Y}d^2(x,y)-\frac{1}{|Y|}\ss(Y).
\end{equation}
In our work we calculate the analogue of above formula \eqref{voronoi} for the case of {\sWards} criterion function \eqref{sWards11} (see \eqref{vorInt} for precise formula and Figure \ref{fig:mouse} for sample effects). 

The practical properties of proposed method are illustrated and examined on synthetic data sets and examples retrieved form the UCI repository \cite{asuncion2007uci}. We compare {\sWards} with similar methods which can be applied for non Euclidean data as k-means, Spectral Clustering and their kernelized versions. Our tests demonstrate that introduced method can be applied for populations detection in simple biological systems.

The paper is organized as follows. Next section gives a brief description of related clustering methods. In section 3 we recall Wards approach to k-means and present its application for spherical {\CEC} criterion function. Section 4 demonstrates the generalization of Voronoi diagrams to the case of arbitrary criterion functions in non Euclidean data paying particular attention on {\sWards} method. The results of experiments and potential applications are given in section 5 while section 6 contains the conclusion.

\section{Related works}

The hierarchical clustering is probably one of the most popular methods to partition data based on any kind of (dis)similarity measure \cite{johnson1967hierarchical}. The well-known k-means algorithm \cite{hartigan1979algorithm} can also be adapted to non Euclidean data by defining a medoid \cite{park2009simple} which plays a role of a generalized notion of mean or by using the Wards method \cite{batagelj1988generalized, spath1975cluster} which reformulates the within cluster sum of squares without the notion of the cluster mean. Despite the wide use of these methods, they are sometimes unable to discover groups with complex structures and different sizes. A lot of modifications were also considered to describe clusters with arbitrary shapes \cite{dhillon2004kernel, wagstaff2001constrained}.  Spectral Clustering uses eigenvectors of similarity matrix to divide elements into groups \cite{zhou2007spectral}. 

Another issue of clustering non Euclidean data sets is the appropriate selection of dissimilarity measure. Examples showed that interesting effects can be obtained by applying Gaussian radial basis function (RBF) \cite{chen1993clustering}. The difficulty is that there is no unified methodology how to choose the radius of this function for particular situation \cite{orr1995regularization, rippa1999algorithm}.

In order to perform a distribution-based clustering a GMM is widely used in Euclidean space \cite{mclachlan2007algorithm}. Nevertheless it cannot be directly generalized to arbitrary data sets with dissimilarity measures. On the other hand, a family of density based clustering such as DBSCAN \cite{kriegel2011density} can be applied for non Euclidean data. Although the method is capable of discovering clusters of arbitrary shapes and does not require the specification of the number of groups, it does not adopt well to clusters with large differences in densities.

Proposed {\sWards} method joins the simplicity and flexibility of k-means with the effects of GMM. Its can be applied in non Euclidean spaces and is based on Gaussian-like probability distributions.

\section{Clustering method}

The proposed {\sWards} clustering is a combination of spherical Cross-Entropy Clustering ({\sCEC}) \cite{tabor2013} with a generalized Wards approach \cite{batagelj1988generalized, spath1975cluster}. In this section we first introduce a basic notation and recall the Wards version of k-means. Then, we show how {\sCEC} can be generalized to non Euclidean data sets via Wards method.

\subsection{Wards method}

Generally, k-means method aims at producing a splitting of data set which optimizes a squared error criterion function. For a group $Y \subset \R^N$ the within cluster sum of squares is defined as:
$$
\ss(Y) = \sum_{y \in Y}\|y-\m_{Y}\|^2,
$$
where $\m_Y$ is a mean of $Y$. The k-means looks for a partition of $X \subset \R^N$ into $k$ pairwise disjoint sets $Y_1,\ldots,Y_k$  such that the function
$$
\sum_{j=1}^k \ss(Y_j)
$$
is minimal.

Note that the above formulas cannot be used directly for non vector data since the mean is not well-defined for general data sets. There are several alternatives \cite{jain1999,steinley2006k,gan2007data,jain2010,xu2009clustering} which allow to partially overcome this difficulty as k-medoids \cite{kaufman1987clustering} or k-clustering \cite{indyk1999sublinear}. The technique related to k-clustering and k-means is the generalized Wards method \cite{batagelj1988generalized, spath1975cluster} which plays the basic role in our investigations. The key idea is the observation that the within cluster sum of squares in Euclidean space can be formulated equivalently without the notion of the center of cluster:

\begin{proposition} \label{th:1} \cite{spath1975cluster} 
If $Y \subset \R^N$, then
$$
\begin{array}{c}
\sum \limits_{y \in Y} \! \|y-\m_Y\|^2\, =\frac{1}{2|Y|}
\sum \limits_{y \in Y} \sum \limits_{z \in Y} \! \|y-z\|^2\, ,
\end{array}
$$
where $|Y|$ is a cardinality of $Y$.
\end{proposition}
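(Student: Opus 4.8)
The plan is to prove the identity by expanding both sides in terms of inner products and showing that each reduces to the common expression $\sum_{y \in Y}\|y\|^2 - |Y|\,\|\m_Y\|^2$. Write $n := |Y|$ and recall the only structural fact about the mean that we shall use, namely $\sum_{y \in Y} y = n\,\m_Y$ (equivalently $\m_Y = \frac{1}{n}\sum_{y\in Y} y$).

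First I would treat the left-hand side. Expanding $\|y-\m_Y\|^2 = \|y\|^2 - 2\langle y,\m_Y\rangle + \|\m_Y\|^2$ and summing over $y\in Y$, the cross term becomes $-2\langle \sum_{y\in Y} y,\, \m_Y\rangle = -2n\|\m_Y\|^2$ by the defining property of the mean, and the constant term contributes $n\|\m_Y\|^2$, so
$$\sum_{y\in Y}\|y-\m_Y\|^2 = \sum_{y\in Y}\|y\|^2 - n\,\|\m_Y\|^2.$$
Next I would expand the right-hand side. Using $\|y-z\|^2 = \|y\|^2 - 2\langle y,z\rangle + \|z\|^2$ and summing over all pairs $(y,z)\in Y\times Y$, the first and third terms each give $n\sum_{y\in Y}\|y\|^2$ by symmetry, while the cross term gives $-2\langle \sum_{y\in Y} y,\, \sum_{z\in Y} z\rangle = -2\,\|{\textstyle\sum_{y\in Y} y}\|^2 = -2n^2\|\m_Y\|^2$. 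Dividing by $2n$ then yields exactly $\sum_{y\in Y}\|y\|^2 - n\|\m_Y\|^2$, which matches the left-hand side.

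There is no genuine obstacle here: the claim is an elementary algebraic identity, and the whole argument is bookkeeping with finite sums and the bilinearity of $\langle\cdot,\cdot\rangle$. The only point demanding a little care is the double sum on the right — tracking which index each factor belongs to and correctly pulling both sums through the inner product — but once the cross term is recognized as $\|\sum_{y\in Y} y\|^2 = n^2\|\m_Y\|^2$ everything collapses. An alternative, essentially equivalent route is to invoke the parallel-axis decomposition $\sum_{y\in Y}\|y-p\|^2 = \sum_{y\in Y}\|y-\m_Y\|^2 + n\|p-\m_Y\|^2$ and average it over $p$ ranging through the points of $Y$, which produces $\sum_{y,z\in Y}\|y-z\|^2 = 2n\sum_{y\in Y}\|y-\m_Y\|^2$; I would mention this only as a remark, since the direct expansion above is the shortest path.
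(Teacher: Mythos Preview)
Your proof is correct. The paper does not actually prove this proposition: it is stated with a citation to \cite{spath1975cluster} and no argument is given, so there is nothing to compare against. Your direct expansion via inner products is the standard route and is entirely sound; the parallel-axis remark at the end is also correct and would serve equally well.
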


This allows to reasonably generalize the within cluster sum of squares to general non Euclidean data set. For this purpose let $X$ be an arbitrary data set and let $d~:~X~\times~X~\to~[~0~,~+~\infty~)$ be a symmetric dissimilarity measure on $X$, i.e,
\begin{itemize}
\item $d(y,y) = 0$, 
\item $d(y,z) = d(z,y)$,
\end{itemize}
for $y,z \in X$. Given two subsets $Y, Z$ of $X$ we define a function \cite{indyk1999sublinear} connected with the average linkage function (also called average neighbor function) \cite{sokal1958, gan2007data} as:
$$
\begin{array}{c}
\D{Y,Z}:=\sum \limits_{y \in Y} \sum \limits_{z \in Z} \! d^2(y,z) \, .
\end{array}
$$

As a generalized within cluster $Y \subset X$ sum of squares we put \cite{batagelj1988generalized}:
\begin{equation} \label{defWard}
\begin{array}{c}
\ss(Y):=\frac{1}{2|Y|}\D{Y,Y}=\frac{1}{2|Y|}
\sum \limits_{y \in Y} \sum \limits_{z \in Y} \! d^2(y,z)\, .
\end{array}
\end{equation}
Then, the goal of Wards method is formulated as follows: 

\medskip

{\bf Wards Optimization Problem \cite{batagelj1988generalized}.} Let $X$ be a data set with a  dissimilarity measure $d$ and let $k \in \N$. Find a splitting of $X$ into $k$ pairwise disjoint sets $Y_1,\ldots,Y_k$ which minimizes the generalized squared error function:
\begin{equation} \label{wardsA}
\EW(Y_1,\ldots,Y_k):=\sum \limits_{i=1}^k \ss(Y_i),
\end{equation}
where $\ss(\cdot)$ is defined by \eqref{defWard}.

\medskip

%%%%%%%%%%%%%%%%%%%%%%%%%%%%%%%%
\subsection{Spherical Wards criterion function}

The Cross-Entropy Clustering ({\CEC}) is a kind of distribution-based clustering which divides an Euclidean data set into groups such that each group is described by optimally fitted Gaussian probability distribution \cite{tabor2013}. The effects of the clustering are similar to those obtained by GMM, but the optimizing criterion function is different. Its value determines the statistical code length of memorization of an arbitrary element of a data set in the case when each cluster uses its own coding algorithm. In particular, the introducing of one more cluster (coding algorithm) requires an additional cost of its identification (increase of the entropy). In consequence, the maintaining of too many clusters is not optimal and it allows for the automatic reduction of unnecessary groups. Another advantage of {\CEC} is that the clustering is performed in a comparable time to computationally efficient k-means method. For more details the reader is referred to \cite{tabor2013, smieja2013image, tabor2013detection}.

Spherical Cross-Entropy Clustering ({\sCEC}) is a variant of {\CEC} which takes into account the family of spherical Gaussian distributions. Since for every group the optimal spherical Gaussian distribution is matched, then data set is partitioned into spherically-shaped clusters. For a splitting $Y_1,\ldots,Y_k$ of $X$ the associated criterion function is defined by \cite{tabor2013}
\begin{equation} \label{spher1}
\begin{array}{l}
\frac{N}{2} \ln(\frac{2 \pi e}{N}) + \sum\limits_{i=1}^k \frac{|Y_i|}{|X|} \cdot \left[ -\ln \frac{|Y_i|}{|X|} + \frac{N}{2} \ln\left( \frac{|X|}{|Y_i|} \tr(\Sigma_{Y_i}) \right) \right],
\end{array}
\end{equation}
where $\Sigma_Y$ is a covariance matrix of group $Y$ and $\tr(\Sigma_Y)$ is a trace of $\Sigma_Y$.

Let us first observe that the notion of covariance matrix can be easily removed from the expression \eqref{spher1}.
\begin{proposition} \label{prop:tr}
If $Y \subset \R^N$ then \cite{tabor2013}:
$$
\tr(\Sigma_Y) = \ss(Y).
$$
\end{proposition}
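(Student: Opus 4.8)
The plan is to prove the identity by a direct rank-one (equivalently, coordinate-wise) expansion of the trace. First I would write the covariance matrix in its defining form $\Sigma_Y = \frac{1}{|Y|}\sum_{y\in Y}(y-\m_Y)(y-\m_Y)^\top$ (with whichever normalization convention the paper adopts so that the stated equality holds on the nose), viewed as an $N\times N$ matrix. Since the trace is linear, $\tr(\Sigma_Y)$ splits as a normalized sum over $y\in Y$ of the traces of the rank-one matrices $(y-\m_Y)(y-\m_Y)^\top$.

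The key step is the elementary identity $\tr(vv^\top) = v^\top v = \|v\|^2$, valid for any column vector $v\in\R^N$: the $i$-th diagonal entry of $vv^\top$ is $v_i^2$, and summing over $i$ gives $\|v\|^2$. Applying this with $v = y-\m_Y$ collapses the expression to $\frac{1}{|Y|}\sum_{y\in Y}\|y-\m_Y\|^2$, which is exactly $\ss(Y)$ as defined in the Wards subsection. By Proposition \ref{th:1} this also equals $\frac{1}{2|Y|}\sum_{y,z\in Y}\|y-z\|^2$, so the statement is simultaneously consistent with the generalized definition \eqref{defWard} specialized to $\R^N$; one could alternatively route the proof through that identity, but the rank-one expansion is the shortest path.

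There is essentially no mathematical obstacle here — the argument is two lines. The only point requiring genuine care is the bookkeeping: one must be explicit about the normalization in the definition of $\Sigma_Y$ (maximum-likelihood covariance versus unnormalized scatter matrix) and about which definition of $\ss(Y)$ is in force, so that the factors of $|Y|$ match on both sides of the claimed equality. Once those conventions are pinned down, linearity of the trace together with $\tr(vv^\top)=\|v\|^2$ finishes the proof at once.
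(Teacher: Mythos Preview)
Your argument is correct and is in fact the standard two-line proof: linearity of the trace plus $\tr(vv^\top)=\|v\|^2$ immediately gives $\tr(\Sigma_Y)=\sum_{y\in Y}\|y-\m_Y\|^2=\ss(Y)$ once the normalization of $\Sigma_Y$ is fixed, and your explicit flagging of that convention issue is apt (the equality as stated forces $\Sigma_Y$ to be the unnormalized scatter matrix, consistent with how \eqref{spher1} is rewritten as \eqref{sWards11}). The paper itself does not supply a proof at all --- it simply states the identity with a citation to \cite{tabor2013} --- so there is nothing to compare your approach against; your proposal fills in exactly what the paper omits.
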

In consequence the application of Wards approach \eqref{defWard} facilitates its interpretation in non Euclidean case for a fixed $N > 0$.

For fully explanation of the formula \eqref{spher1} in the context of non Euclidean space, the value of dimension $N$ has to be specified. As the most reasonable way to set this value we recommend to use the estimation of a dimension of $X$. In the present study we apply the Maximum Likelihood Estimation (MLE) of intrinsic dimension of $X$ proposed in \cite{maxdim} and modified in \cite{comments}. More precisely, given $X = \{x_1,\ldots,x_n\}$ the maximum likelihood estimator of a dimension $N$ of $X$ calculated for each $x \in X$ equals \cite{maxdim}:
$$
\hat{N}_k(x) = \frac{1}{k-1} \sum_{j=1}^{k-1} \log \frac{d(x, x_k)}{d(x, x_j)},
$$
for $k \in \{1,\ldots,n\}$. Since the above value is dependent on the choice of $k$ and $x$, then one should average the results over $x \in X$ and $\tilde{K} \subset \{1,\ldots,n\}$ to obtain the final estimator of $N$ \cite{comments}.

Nevertheless, one can tune this value in the learning process as well as may set it to any positive number. In the experimental section we show that for high values of $N$  more clusters are created in the clustering while for low values of $N$ the method prefers to reduce a number of groups. From now on, $N$ will be treated as a free parameter selected by the user, but we keep in mind that the easiest way to tune this value is to use the MLE procedure described above.

All in all, the generalized Wards approach and the appropriate choice of the dimension parameter $N$ allow for the understanding of spherical cross-entropy criterion function in arbitrary data set with a dissimilarity measure. In consequence, the informal notion of spherical Gaussian probability distribution based on any dissimilarity measure could be considered. We conclude this subsection with a formulation of spherical Wards ({\sWards}) optimization problem:

\medskip

{\bf Spherical Wards Optimization Problem. } Let $X$ be a data set with a dissimilarity measure $d$, $n \in \N$ be an initial number of clusters and $N > 0$ be a free parameter. Find $k \leq n$ and a partition $Y_1,\ldots,Y_k$ of $X$ which minimizes spherical Wards criterion function 
\begin{equation} \label{sWards11}
\begin{array}{l}
\ESW(Y_1,\ldots,Y_k; N):=\\
\frac{N}{2} \ln(\frac{2 \pi e}{N}) + \sum\limits_{i=1}^k \frac{|Y_i|}{|X|} \cdot \left[ \frac{N}{2} \ln(\ss(Y_i)) - \frac{N+2}{2} \ln\left(\frac{|Y_i|}{|X|}\right) \right],
\end{array}
\end{equation}
where $\ss(\cdot)$ is defined by \eqref{defWard}.

\medskip

\subsection{Clustering algorithm}

One can show that the natural modification of the Hartigan algorithm \cite{batagelj1988generalized, hartigan1979algorithm, tabor2013} can be used to minimize the {\sWards} criterion function \eqref{sWards11}. We will now discuss its technical aspects.

The procedure can be divided into two parts: initialization and iteration. In the initialization phase $n \in \N$ groups are created randomly. During iteration the algorithm reassigns elements between clusters in order to minimize the {\sWards} criterion function \eqref{sWards11}.

More precisely, in the iteration part we repeatedly go over all elements of $X$ applying the following steps: 
\begin{enumerate}
%\item If $x \in X$ is unassigned then assign it to this valid group for which the increase of energy \eqref{sWards11} is minimal,
\item Reassign $x \in X$ to this cluster for which the decrease of energy \eqref{sWards11} is maximal,
\item If a probability of some cluster is less than a fixed number $\e>0$, then remove this cluster and assign its elements to these groups for which the increase of energy \eqref{sWards11} is minimal,
\end{enumerate}
until no group membership has been changed. 

The number $\e$ was introduced to speed up the reduction of redundant clusters. In our experiments we always use the value $\e=1\%$. Thus, the group is removed if it contains less than $1\%$ of all elements of $X$. Clearly, the procedure is not deterministic and leads to a local minimum of \eqref{sWards11} \cite{jain1999}. Therefore, to provide the satisfactory results the algorithm should be evaluated several times -- the final result is that which gives the minimal value of {\sWards} criterion function.

The above algorithm can be seen as an online version of standard partitional clustering procedure which is able to reduce unnecessary groups. Every time the element is processed the clusters parameters are recalculated. This implies that to efficiently apply this procedure we have to recompute
$$
\ss(Y \cup \{x\}) \mbox{ and }\ss(Y \setminus \{x\}).
$$
For this purpose the following formulas are useful:
\begin{proposition} \label{HartCor}  \cite{spath1975cluster}
Let  $Y \subset X$ and $x \in X$.

a) If $x \not\in Y$, then
$$
\ss(Y \cup \{x\}) = \frac{|Y|}{|Y|+1}\ss(Y) + \frac{1}{|Y|+1}\D{\{x\}, Y}.
$$

b) If $x \in Y$, then
$$
\ss(Y \setminus \{x\})=\frac{|Y|}{|Y|-1}\ss(Y) - \frac{1}{|Y|-1}\D{\{x\}, Y}.
$$
\end{proposition}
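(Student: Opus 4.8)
My plan is to reduce everything to two elementary facts about the bracket functional and then do bookkeeping. First I would record that $\D{\cdot,\cdot}$ is additive in each argument separately: for a disjoint union $U = U_1 \cup U_2$ one has $\D{U,V} = \D{U_1,V} + \D{U_2,V}$, and likewise in the second slot, while $\D{U,V} = \D{V,U}$ because $d$ is symmetric, and $\D{\{x\},\{x\}} = d^2(x,x) = 0$. Second, directly from the definition \eqref{defWard} of the generalized within-cluster sum of squares, $\D{Y,Y} = 2|Y|\,\ss(Y)$, so that $\ss$ and $\D{\cdot,\cdot}$ are interchangeable up to the factor $2|Y|$.

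For part a) I would set $m = |Y|$ and expand $\D{Y\cup\{x\}, Y\cup\{x\}}$ by additivity in both arguments into the four pieces $\D{Y,Y}$, $\D{Y,\{x\}}$, $\D{\{x\},Y}$ and $\D{\{x\},\{x\}}$. The last is $0$, the middle two coincide by symmetry, so $\D{Y\cup\{x\}, Y\cup\{x\}} = \D{Y,Y} + 2\D{\{x\},Y}$. Dividing by $2(m+1) = 2|Y\cup\{x\}|$ and replacing $\D{Y,Y}$ by $2m\,\ss(Y)$ yields $\ss(Y\cup\{x\}) = \frac{m}{m+1}\ss(Y) + \frac{1}{m+1}\D{\{x\},Y}$, which is the claimed identity.

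For part b) the shortest route is to invoke part a) with $Y$ replaced by $Y' := Y\setminus\{x\}$, which has cardinality $m-1$ and does not contain $x$; this gives $\ss(Y) = \ss(Y'\cup\{x\}) = \frac{m-1}{m}\ss(Y') + \frac{1}{m}\D{\{x\},Y'}$. Because $\D{\{x\},Y'} = \D{\{x\},Y} - \D{\{x\},\{x\}} = \D{\{x\},Y}$, solving this linear equation for $\ss(Y') = \ss(Y\setminus\{x\})$ gives $\ss(Y\setminus\{x\}) = \frac{m}{m-1}\ss(Y) - \frac{1}{m-1}\D{\{x\},Y}$. Alternatively one can simply rerun the expansion of part a) with $Y\setminus\{x\}$ in the role of $Y$.

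I do not expect a genuine obstacle here. The only thing to watch is the ``diagonal'' term: one must remember that $d(x,x) = 0$ kills $\D{\{x\},\{x\}}$ and hence makes $\D{\{x\}, Y\setminus\{x\}}$ equal to $\D{\{x\},Y}$, and one must keep the three cardinalities $m$, $m+1$, $m-1$ straight, since they occur both explicitly in the coefficients and implicitly inside the definition of $\ss$.
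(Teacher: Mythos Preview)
Your argument is correct. The bilinearity-style expansion of $\D{Y\cup\{x\},Y\cup\{x\}}$ into four blocks, together with $\D{\{x\},\{x\}}=d^2(x,x)=0$ and the identity $\D{Y,Y}=2|Y|\,\ss(Y)$, yields part~a) immediately, and your reduction of part~b) to part~a) via $Y'=Y\setminus\{x\}$ is clean and valid.

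As for comparison: the paper does not supply its own proof of this proposition. It is stated with a citation to Sp\"ath~\cite{spath1975cluster} and used as a black box (notably in the proof of Theorem~\ref{latweTw}). So there is nothing in the paper to compare your approach against; your write-up simply fills in the omitted verification, and the route you take is the natural one.
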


Given $k$ clusters, the computational complexity of one iteration of standard Hartigan procedure requires about $k \cdot N \cdot |X|$ operations (for data sets contained in $\R^N$). When applying the Wards approach this complexity changes to $k \cdot |X|^2$ operations. Since the mean of cluster is not defined in general situation, one has to pay an additional cost of recalculating the within cluster sum of squares during every reassigning. However, we do not need to recalculate the distance between the reassigning elements and the mean of a cluster which decreases the computational cost $N$ times.

\section{Generalized Voronoi diagram}

There arises a natural problem how to graphically present the clustering results. Clearly, we can mark the elements of each cluster with different label. However, in practice it is usually more clear to show the division of the whole space. In this section we show that we can naturally obtain an equivalence of the Voronoi diagram for any criterion function in non Euclidean space. In particular we apply these results to define the Voronoi diagram for {\sWards}.

\subsection{Classical diagram}

Let us recall that in the case of classical version of Voronoi diagram ($k$-means method) the point $x$ is associated with this cluster whose center is the closest to $x$. More precisely, it is classified to this cluster $Y_i$ which minimizes $d(x; \m_i)$, where $m_i$ is a mean of $Y_i$. We would like to mention that one can consider the alternative to the Voronoi diagrams as described in \cite{telgarsky2010hartigan}. It provides the partition of data but does not induce a natural partition of the space (see \cite{telgarsky2010hartigan} for more details).

To generalize the notion of the Voronoi diagram to non Euclidean space (Wards k-means), we need to be able to compute the distance of a point from the center of the cluster (without using it in the computations).

\begin{proposition} \label{th:2} \cite{spath1975cluster}
Let $x \in \R^N$ be fixed and $Y \subset \R^N$ be a subset  of $\R^N$ with mean $\m_{Y}$. Then
$$
\|x-\m_Y\|^2=\frac{1}{|Y|}\sum\limits_{y \in Y} \|x-y\|^2 - \frac{1}{2|Y|^2}
\sum \limits_{y \in Y} \sum \limits_{z \in Y} \! \! \|y-z\|^2.
$$
\end{proposition}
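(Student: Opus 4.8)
The plan is to reduce the identity to Proposition~\ref{th:1} by means of the elementary orthogonal decomposition of $\|x-y\|^2$ around the mean $\m_Y$. First I would write, for every $y\in Y$,
$$
\|x-y\|^2=\|(x-\m_Y)+(\m_Y-y)\|^2=\|x-\m_Y\|^2+2\langle x-\m_Y,\,\m_Y-y\rangle+\|\m_Y-y\|^2.
$$
This is just expansion of the norm of a sum in the Euclidean inner product, so no hypothesis beyond $x,y\in\R^N$ is used.

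Next I would sum this identity over all $y\in Y$. The middle term contributes $2\langle x-\m_Y,\ \sum_{y\in Y}(\m_Y-y)\rangle$, and $\sum_{y\in Y}(\m_Y-y)=|Y|\m_Y-\sum_{y\in Y}y=0$ by the very definition of the arithmetic mean $\m_Y$; hence the cross term vanishes. What remains is
$$
\sum_{y\in Y}\|x-y\|^2=|Y|\cdot\|x-\m_Y\|^2+\sum_{y\in Y}\|y-\m_Y\|^2=|Y|\cdot\|x-\m_Y\|^2+\ss(Y),
$$
where the last equality is simply the definition of $\ss(Y)$ in $\R^N$.

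Then I would solve for $\|x-\m_Y\|^2$, obtaining
$$
\|x-\m_Y\|^2=\frac{1}{|Y|}\sum_{y\in Y}\|x-y\|^2-\frac{1}{|Y|}\ss(Y),
$$
and finally substitute the Wards reformulation furnished by Proposition~\ref{th:1}, namely $\ss(Y)=\frac{1}{2|Y|}\sum_{y\in Y}\sum_{z\in Y}\|y-z\|^2$. This replaces the last term by $\frac{1}{2|Y|^2}\sum_{y\in Y}\sum_{z\in Y}\|y-z\|^2$ and yields exactly the claimed formula.

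I do not expect any genuine obstacle: the computation is routine. The only points that require a moment's care are the vanishing of the cross term — which is precisely the defining property of the mean and the one place where we really use that $\m_Y$ is the \emph{centroid} of $Y$ — and the bookkeeping of the factors $|Y|$ when trading $\ss(Y)$ for the double sum through Proposition~\ref{th:1}.
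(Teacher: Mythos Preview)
Your argument is correct and complete: the orthogonal decomposition around $\m_Y$, the vanishing of the cross term by the defining property of the mean, and the substitution via Proposition~\ref{th:1} all go through exactly as you describe. Note that the paper does not actually supply its own proof of this proposition --- it is stated with a citation to \cite{spath1975cluster} and no argument is given --- so your write-up in fact provides more than the paper does.
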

The above allows the formulation of the analogue of the square of the ``classical'' distance of a point $x$ from the center of $Y$. Let $Y$ be a subset of data space $X$ with a dissimilarity measure $d$ and let $x \in X$ be fixed. We define the mean square distance of $x$ from $Y$ by
\begin{equation}\label{voro}
d^2(x;Y):=\frac{1}{|Y|}(\D{\{x\},Y}-\ss(Y)).
\end{equation}
Applying the above formula one can draw the equivalence of the Voronoi diagram for Wards k-means, i.e. an element $x \in X$ is classified to this cluster which minimizes \eqref{voro}.

\subsection{Diagram for arbitrary criterion function}

We are now going to present a reasoning which allows to create a kind of Voronoi diagram for arbitrary criterion function. This will be useful for constructing a division of the space for the case of {\sWards} method. Obtained results are consistent with the classical Voronoi diagram in the case of Wards $k$-means presented in previous section.

Let $X$ be a space with a dissimilarity measure $d$ and let $Y \subset X$ represent our data. We extend $X$ by introducing a weight function 
$$
w: X \ni x \to \left\{
\begin{array}{ll}
	w(x) \in [0,+\infty) &, x \in Y, \\ 
	0 &, x \in X \setminus Y,
\end{array}
\right.
$$
which assigns a weight to every element of $X$. Then we consider an extended data set
$$
Y^w = \{(y,w(y)): y \in Y\}.
$$

%rozszerzone definicje
We define the operations $\D{\cdot,\cdot}$ and $\ss(\cdot)$ adapted for $Y^w$. Given $Z, Y_1, Y_2 \subset Y$ we put:
\begin{enumerate}
\item $|Z^w| := \sum\limits_{z \in Z} w(z)$,
\item $\D{Y_1^w,Y_2^w} := \sum\limits_{y_1 \in Y_1} \sum\limits_{y_2 \in Y_2} d^2(y_1,y_2) w(y_1)w(y_2)$,
\item $\ss(Z^w) := \frac{1}{2|Z^w|} \D{Z^w,Z^w}$.
\end{enumerate}
Then the analogue of k-means criterion function equals:
\begin{equation} \label{wardsW}
\EW(Y^w_1,\ldots,Y^w_k) = \sum_{i=1}^k \ss(Y^w_i),
\end{equation}
where $Y_1,\ldots,Y_k$ is a splitting of $Y$. If $w_{| Y} \equiv 1$ then \eqref{wardsW} coincides with \eqref{wardsA}.

In order to explain our technique assume that $Y_1,\ldots,Y_k$ is a splitting of data set $Y$ and $E$ is an arbitrary criterion function. For a fixed point $x \in X$ we consider a mapping 
$$
\begin{array}{l}
E^i_{x, [Y^w_1,\ldots,Y^w_k]}:h \to \\[1.2ex]
E(Y^w_1,\ldots,Y^w_{i-1},(Y_i \cup \{x\})^{w+h\delta_x},Y^w_{i+1},\ldots,Y^w_k),
\end{array}
$$
where $h \geq 0$ and $i \in \{1,\ldots,k\}$. It determines the value of criterion function $E$ when $x \in X$ is associated with $i$-th cluster with a weight increased by $h$.

We define the functions (wherever they exist)
$$
\partial_i E(x,[Y^w_1,\ldots,Y^w_k]):=(E^i_{x,[Y^w_1,\ldots,Y^w_k]})'(0),
$$
for $i \in \{1,\ldots,k\}$. Observe that $\partial_iE$ coincides with the infinitesimal change in energy
when we add $x$ to the $i$-th cluster. Thus, in Voronoi diagram the point $x \in X$ should be assigned to this cluster which minimizes $\partial_iE(x,[Y^w_1,\ldots,Y^w_k])$.

Let us show that the above reasoning is consistent with the classical results \eqref{voro} for Wards $k$-means criterion function \eqref{wardsW}:
\begin{theorem} \label{latweTw}
Let $Y$ be a subset of a space $X$ with a dissimilarity measure $d$ and let $w(y) = 1$, for all $y \in Y$, be a weight function. If $E$ denotes the squared error function \eqref{wardsW} and $Y_1,\ldots,Y_k$ is a fixed splitting of $Y$ then
$$
\partial_iE(x,[Y^w_1,\ldots,Y^w_k])=d^2(x;Y_i),
$$
for $x \in X$ and $i \in \{1,\ldots,k\}$.
\end{theorem}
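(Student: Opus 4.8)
The plan is to exploit the fact that in the sum \eqref{wardsW} defining $E$ only the $i$-th summand depends on the parameter $h$, so that
$$
\partial_iE(x,[Y^w_1,\ldots,Y^w_k])=\frac{d}{dh}\Big|_{h=0}\ss\big((Y_i\cup\{x\})^{w+h\delta_x}\big),
$$
and then to compute the right-hand side straight from the weighted-set definitions of $|\cdot|$, $\D{\cdot,\cdot}$ and $\ss(\cdot)$. Write $Z=Y_i\cup\{x\}$ and $v=w+h\delta_x$. I first record the easy identities that follow from $w\equiv1$ on $Y$: $|Y_i^w|=|Y_i|$, $\ss(Y_i^w)=\ss(Y_i)$, hence $\D{Y_i^w,Y_i^w}=2|Y_i|\ss(Y_i)$, and $|Z^v|=|Y_i|+h$.

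The main computation is the expansion of $\D{Z^v,Z^v}$, obtained by splitting the double sum according to whether each index equals $x$ or lies in $Y_i$: the block with both indices in $Y_i$ gives $\D{Y_i^w,Y_i^w}=2|Y_i|\ss(Y_i)$, the two mixed blocks each give $h\,\D{\{x\},Y_i}$, and the diagonal term with both indices equal to $x$ vanishes since $d(x,x)=0$. Therefore
$$
\D{Z^v,Z^v}=2|Y_i|\ss(Y_i)+2h\,\D{\{x\},Y_i},\qquad
\ss(Z^v)=\frac{|Y_i|\ss(Y_i)+h\,\D{\{x\},Y_i}}{|Y_i|+h}.
$$

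It then only remains to differentiate this rational function of $h$ at $h=0$; the quotient rule yields
$$
\frac{d}{dh}\Big|_{h=0}\ss(Z^v)=\frac{\D{\{x\},Y_i}\cdot|Y_i|-|Y_i|\ss(Y_i)}{|Y_i|^2}=\frac{1}{|Y_i|}\big(\D{\{x\},Y_i}-\ss(Y_i)\big),
$$
which is exactly $d^2(x;Y_i)$ of \eqref{voro}, proving the claim.

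I expect the only delicate point to be the bookkeeping when $x$ already belongs to $Y_i$: then $Z=Y_i$ and $v$ simply raises the weight of $x$ from $1$ to $1+h$, so the index splitting above must be redone. However, peeling off from $\D{Y_i,Y_i}$ the terms involving $x$ and again invoking $d(x,x)=0$ shows that $|Z^v|$ and $\D{Z^v,Z^v}$ collapse to the very same expressions $|Y_i|+h$ and $2|Y_i|\ss(Y_i)+2h\,\D{\{x\},Y_i}$, so the conclusion is unaffected. (For $x\notin Y$ one has $w(x)=0$ and the first splitting applies verbatim.)
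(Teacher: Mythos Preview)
Your proof is correct and follows essentially the same route as the paper's: both reduce to computing $\ss\big((Y_i\cup\{x\})^{w+h\delta_x}\big)$ as the rational function $\frac{|Y_i|\ss(Y_i)+h\,\D{\{x\},Y_i}}{|Y_i|+h}$ and then pass to $h=0$. The only cosmetic differences are that the paper invokes Proposition~\ref{HartCor} for this formula while you expand the double sum directly, and the paper computes the difference quotient and takes a limit while you differentiate via the quotient rule; your added remark on the case $x\in Y_i$ is a nice bit of extra care not present in the paper.
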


\begin{proof}
Let $h > 0$. By Corollary \ref{HartCor}, we have
$$
\begin{array}{l}
\frac{1}{h}[E(Y^w_1,\ldots,Y^w_{i-1},Y^w_i\cup \{(x,h)\},Y^w_{i+1},\ldots,Y^w_k) \\[1.2ex]
\,\,\,\,\,\,\, -E(Y^w_1,\ldots,Y^w_k)] \\[1.2ex]
=\frac{1}{h}\left[\ss((Y_i\cup\{x\})^{w + h\delta_x}) -\ss(Y^w_i)\right] \\[1.2ex]
=\frac{1}{h}\left[\frac{|Y_i^w| \ss(Y_i^w) + \D{\{(x,h)\}, Y_i^w}}{|Y_i^w|+h} - \ss(Y_i^w)\right] \\[1.2ex]
=\frac{1}{h}\frac{|Y_i^w| \ss(Y_i^w) + h \D{\{(x,1)\}, Y_i^w} - (|Y_i^w| + h)\ss(Y_i^w)}{|Y_i^w|+h}  \\[1.2ex]
=\frac{\D{(x,1),Y_i^w}-\ss(Y_i^w)}{|Y_i^w|+h}.
\end{array}
$$
Since $w_{|Y} \equiv 1$ then
$$
\begin{array}{l}
\frac{\D{(x,1),Y_i^w}-\ss(Y_i^w)}{|Y_i^w|+h} =\frac{\D{x,Y_i}-\ss(Y_i)}{|Y_i^w|+h}\to \\[1.2ex]
\frac{1}{|Y_i|} (\D{x,Y_i}-\ss(Y_i)) \text{ , as } h \to 0,
\end{array}
$$
which yields the assertion of the theorem.
\end{proof}

\subsection{Voronoi diagram for {\sWards}}

The following theorem presents how to create the Voronoi diagram for {\sWards} criterion function:
\begin{theorem}
Let $Y$ be a subset of a space $X$ with a dissimilarity measure $d$ and let $w(y) = 1$, for all $y \in Y$, be a weight function. If $E$ denotes the {\sWards} criterion function for a data set with weights and $Y_1,\ldots,Y_k$ is a fixed splitting of $Y$ then
$$
\begin{array}{l}
\partial_iE(x,[Y^w_1,\ldots,Y^w_k])\\[1.2ex]
=\frac{1}{|X|}\left[\frac{N}{2}\left(\ln(\ss(Y_i))+|Y_i|\frac{d^2(x;Y_i)}{\ss(Y_i)}\right)- \frac{N+2}{2} (\ln|Y_i|+1)\right] .
\end{array}
$$
\end{theorem}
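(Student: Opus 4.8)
The plan is to compute $\partial_iE(x,[Y^w_1,\ldots,Y^w_k])=(E^i_{x,[Y^w_1,\ldots,Y^w_k]})'(0)$ directly, reusing almost all of the work already done in the proof of Theorem~\ref{latweTw}. Write the weighted {\sWards} energy as the natural analogue of \eqref{sWards11},
$$
E(Y^w_1,\ldots,Y^w_k;N)=\frac N2\ln\!\Big(\frac{2\pi e}{N}\Big)+\sum_{j=1}^k\frac{|Y^w_j|}{|X|}\Big[\frac N2\ln\ss(Y^w_j)-\frac{N+2}{2}\ln|Y^w_j|\Big].
$$
Replacing $Y^w_i$ by $(Y_i\cup\{x\})^{w+h\delta_x}$ changes neither the additive constant nor the summands with $j\ne i$, so $E^i_{x,[\cdots]}(h)$ is a constant plus
$$
\frac{|Y^w_i|+h}{|X|}\Big[\frac N2\ln\big(\ss((Y_i\cup\{x\})^{w+h\delta_x})\big)-\frac{N+2}{2}\ln(|Y^w_i|+h)\Big],
$$
and since $w_{|Y}\equiv1$ we have $|Y^w_i|=|Y_i|$, $\ss(Y^w_i)=\ss(Y_i)$ and $\D{\{(x,1)\},Y^w_i}=\D{\{x\},Y_i}$. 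Thus the whole problem reduces to differentiating this single term at $h=0$.

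The one non-mechanical ingredient is the dependence of the within-cluster sum of squares on $h$. By the weighted form of Proposition~\ref{HartCor}(a) already used inside the proof of Theorem~\ref{latweTw},
$$
\ss\big((Y_i\cup\{x\})^{w+h\delta_x}\big)=\frac{|Y_i|\,\ss(Y_i)+h\,\D{\{x\},Y_i}}{|Y_i|+h},
$$
a rational function of $h$ with value $\ss(Y_i)$ at $h=0$. Differentiating it at $h=0$ — this is precisely the limit evaluated in the proof of Theorem~\ref{latweTw} — and recalling the definition \eqref{voro} of the mean square distance gives
$$
\frac{d}{dh}\Big|_{h=0}\ss\big((Y_i\cup\{x\})^{w+h\delta_x}\big)=\frac{1}{|Y_i|}\big(\D{\{x\},Y_i}-\ss(Y_i)\big)=d^2(x;Y_i).
$$

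It then remains to apply the product and chain rules. Writing the $i$-th term above as $\frac1{|X|}(|Y_i|+h)\,v(h)$ with $v(h)=\frac N2\ln\ss\big((Y_i\cup\{x\})^{w+h\delta_x}\big)-\frac{N+2}{2}\ln(|Y_i|+h)$, the previous step yields $v(0)=\frac N2\ln\ss(Y_i)-\frac{N+2}{2}\ln|Y_i|$ and $v'(0)=\frac N2\cdot\frac{d^2(x;Y_i)}{\ss(Y_i)}-\frac{N+2}{2}\cdot\frac1{|Y_i|}$, whence
$$
\partial_iE(x,[\cdots])=\frac1{|X|}\left(v(0)+|Y_i|\,v'(0)\right)=\frac1{|X|}\left[\frac N2\left(\ln\ss(Y_i)+|Y_i|\frac{d^2(x;Y_i)}{\ss(Y_i)}\right)-\frac{N+2}{2}\left(\ln|Y_i|+1\right)\right],
$$
the ``$+1$'' coming from $|Y_i|\cdot\frac{N+2}{2|Y_i|}$; this is the asserted identity. (If instead one keeps $\ln(|Y^w_j|/|X|)$ inside the bracket, exactly as in \eqref{sWards11}, one picks up the extra term $\frac{N+2}{2|X|}\ln|X|$, which is independent of $i$ and $x$ and hence irrelevant for the Voronoi assignment.) I expect no genuine analytic obstacle: the only real care is in tracking which quantities depend on $h$ and invoking the correct weighted Hartigan-type formula for $\ss$; once its derivative is identified with $d^2(x;Y_i)$ via \eqref{voro}, everything else is routine differentiation.
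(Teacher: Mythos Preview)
Your proof is correct and follows essentially the same route as the paper: the paper's proof simply remarks that Theorem~\ref{latweTw} gives $\partial_i\ss(Y_i^w)=d^2(x;Y_i)$ and $\partial_i|Y_i^w|=1$, then applies $\partial_i$ to \eqref{sWards11} via the product and chain rules, which is exactly what you carry out in detail. Your parenthetical about the extra $\frac{N+2}{2|X|}\ln|X|$ term is also on point; the paper silently drops it as an $i$-independent constant.
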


\begin{proof}
Roughly speaking, Theorem \ref{latweTw} says that $\partial_i \ss(Y^w_i)=d^2(x;Y_i)$. Moreover, $\partial_i|Y^w_i|=1$. Applying the operator $\partial_i$ and the above to  \eqref{sWards11} we easily get the assertion of the theorem.
\end{proof}

\begin{figure}[t]
\centering
\includegraphics[width=2.0in]{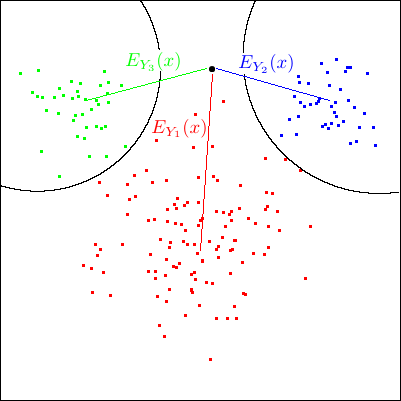}
	\caption{{\bf Construction of Voronoi diagram.} Given a partition of $Y \subset X$, the procedure iterates over all data space elements $x \in X$ (including also elements which did not participate in the clustering), calculates the values of assignment function $E_{Y_i}(x)$ for each cluster $Y_i$ and attaches $x$ to this group $Y_j$ which minimizes $E_{Y_j}(x)$.} 
	\label{fig:vorr} 
\end{figure}

Consequently, given a partition $Y_1,\ldots,Y_k$ of $Y$, to associate a point $x \in X$ to a cluster it is sufficient to find $i \in \{1,\ldots,k\}$ which minimizes
\begin{equation}\label{vorInt}
E_{Y_i}(x) = \ln(\ss(Y_i))+|Y_i|\frac{d^2(x;Y_i)}{\ss(Y_i)}-(1+\frac{2}{N}) \ln|Y_i|.
\end{equation}
If $X$ is infinite, then one can apply its quantization into a finite number of regions before applying a Voronoi diagram. The reader is referred to Figure \ref{fig:vorr} for more detailed explanation of the above described procedure.

\section{Experiments}

In this section we discuss some fundamental properties as well as the potential applications of proposed clustering method and present a short evaluation study. The implementation of {\sWards} is available from http://www.ii.uj.edu.pl/{\textasciitilde}smieja/sWards-app.zip\footnote{Contact the first author for the explanations.}.

\subsection{Synthetic data sets}

In order to show the capabilities of {\sWards} we examined its resistance on the change of scale and its sensitivity on the unbalanced data. We compared the clustering results with the ones obtained with use of related methods which can be applied for non Euclidean spaces: Wards k-means and Spectral Clustering (kernlab R package was used for the implementations of this algorithm \cite{karatzoglou2004kernlab}). Since {\sWards} automatically detects the resultant number of groups, then we ran it with 10 initial clusters while the other methods used the number of groups returned by {\sWards}\footnote{Such a technique for a detection of clusters number was chosen in order to provide the correspondence between clustering results for all methods.}. The value of parameter $N$ (dimension of space) for {\sWards} was set automatically with use of MLE method \cite{maxdim, comments}. To provide more stable results, each algorithm was run 10 times and the result with the lowest value of criterion function was chosen.

\begin{example}\label{scaleInvariance}{\bf Scale invariance} 

In the first experiment we examined the invariance of algorithms on the change of scale. A data set was generated from the mixture of two spherical Gaussian distributions,
$$
\frac{1}{2} \G_1(r) + \frac{1}{2} \G_2(1-r)
$$
with different covariance matrices
$$
C_1 = 
\begin{pmatrix}
r & 0\\
0 & r
\end{pmatrix}
\text{ , } C_2 = 
\begin{pmatrix}
1-r & 0\\
0 & 1-r
\end{pmatrix}
\text{, for } r \in (0,1),
$$
centered at 
$$
\m_1 = (-1,0) \text{ , } \m_2 =(1,0).
$$
The parameter $r$ controls the width of Gaussians.

The Figure \ref{fig:proportion2} presents the ratios of resulted clusters sizes. The {\sWards} method is robust to the change of scale -- the clusters remained almost equally-sized for all $r \in (0,1)$. The clustering result was the most dependent on the widths of Gaussians in the case of k-means.

\end{example}

\begin{figure}[t]
\centering
\includegraphics[trim=0cm 0.6cm 0cm 2cm, clip=true, width=3.0in]{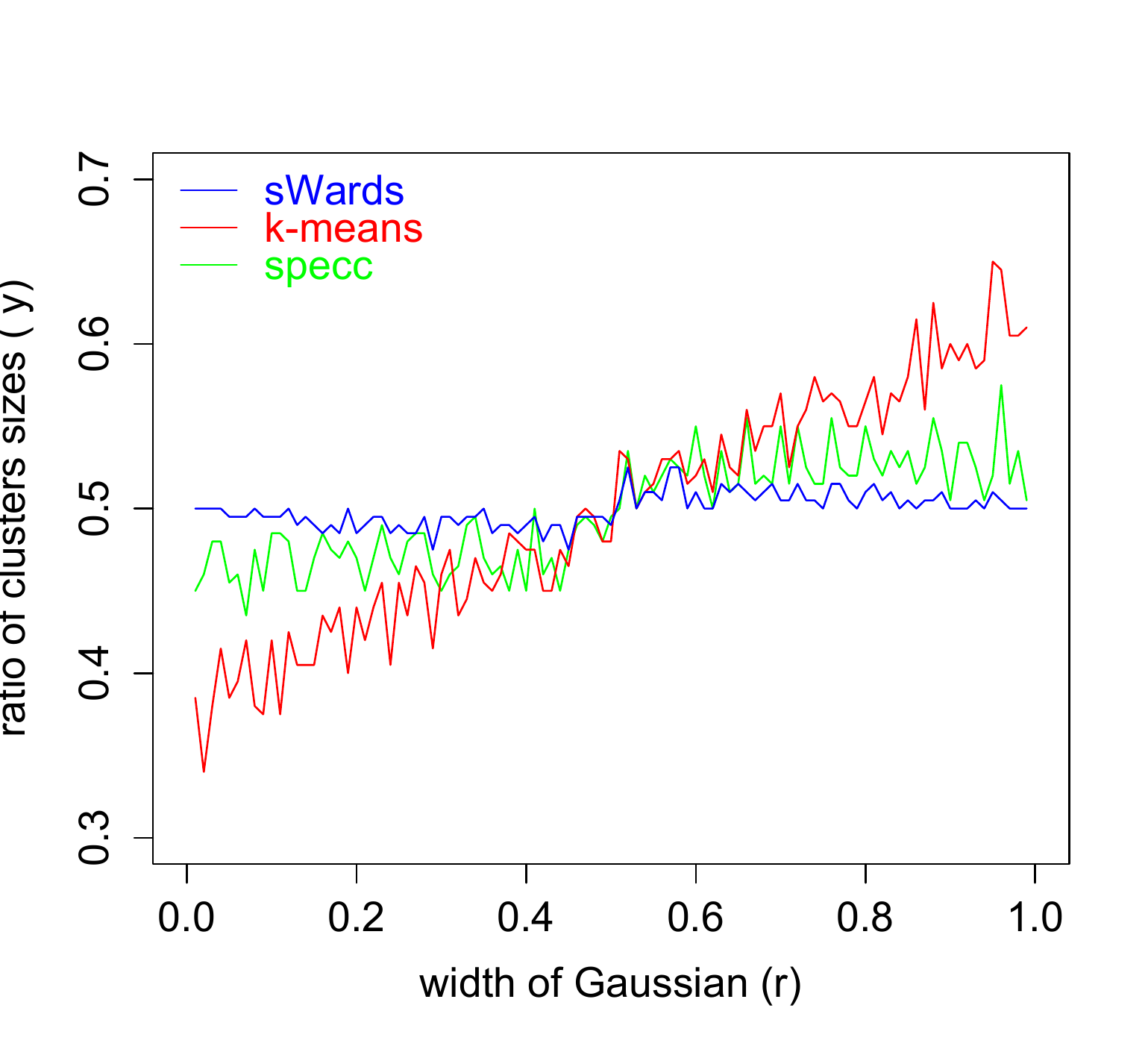}
	\caption{{\bf Scale invariance.} The rations of clusters sizes for a data set generated from the mixture of two spherical Gaussian distributions $\frac{1}{2}~\G_1~(~r~)~+~\frac{1}{2}~\G_2~(~1~-~r~)$ when changing the width $r$ of Gaussians. The optimal curve should be a constant function, $y=\frac{1}{2}$.} 
	\label{fig:proportion2} 
\end{figure}

\begin{figure}[t]
\centering
\includegraphics[trim=0cm 0.6cm 0cm 2cm, clip=true, width=3.0in]{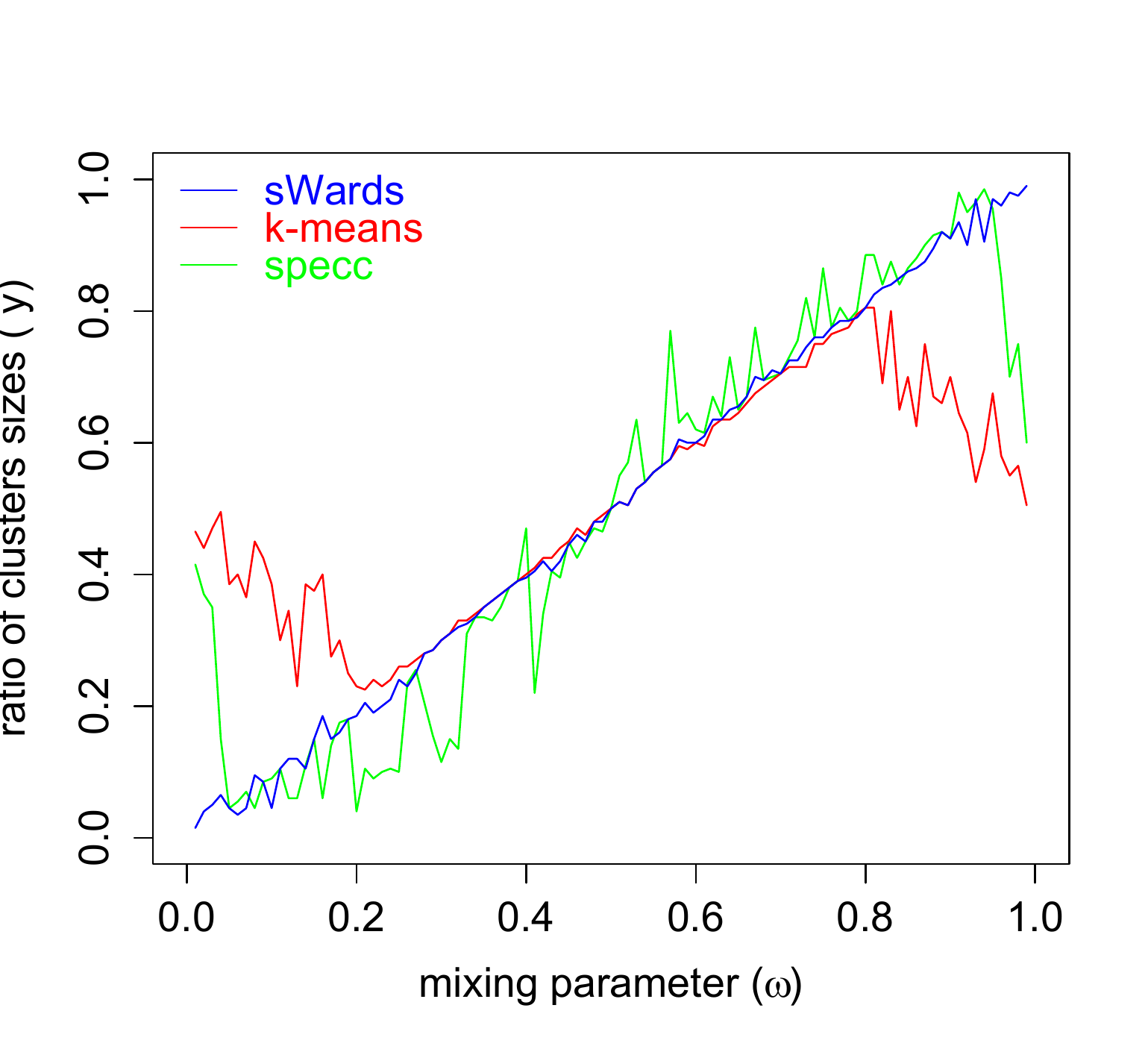}
	\caption{{\bf Sensitivity on the unbalanced data.} The rations of clusters sizes for a data set generated from the mixture of two spherical Gaussian distributions $\omega~\G_1~+~(~1~-~\omega)~\G_2$. Number $\omega \in (0,1)$ controls the number of elements produced by each Gaussian. The optimal curve should be a linear function, $y	=\omega$.}
	\label{fig:proportion} 
\end{figure}

\begin{example}\label{unbalancedData} {\bf Unbalanced data}

We have also tested how the number of elements generated from the individual distributions affects the clustering results. For this purpose data was generated from the mixture of two Gaussians
$$
\omega \cdot \G_1 + (1-\omega) \cdot \G_2 \text{ , for } \omega \in (0,1),
$$
with identical covariance matrices
$$
C_1 = C_2 =  
\begin{pmatrix}
\frac{1}{2} & 0\\
0 & \frac{1}{2}
\end{pmatrix},
$$
but different centers
$$
\m_1 = (-1,0) \text{ , } \m_2 = (1,0).
$$
The number of elements generated from each Gaussian is determined by the value of parameter $\omega$.

The ratios of clusters sizes are shown in the Figure \ref{fig:proportion}. One can observe that the proportions specified by $\omega$ was preserved by {\sWards} method. In the Spectral Clustering the  results are less stable. On the other hand Wards k-means has a tendency to build equally-sized clusters.

\end{example}

\subsection{Dimension estimation}

To apply the {\sWards} criterion function in the case of arbitrary non Euclidean space the value of dimension parameter $N$  needs to be specified. In the previous subsection we showed that the reasonable clustering results can be obtained calculating this value using MLE method \cite{maxdim, comments}. We will experimentally show how the clustering effects differ when the value of $N$ changes.

\begin{figure}[t]
\centering
\includegraphics[trim=0cm 0.6cm 0cm 1.8cm, clip=true, width=3.0in]{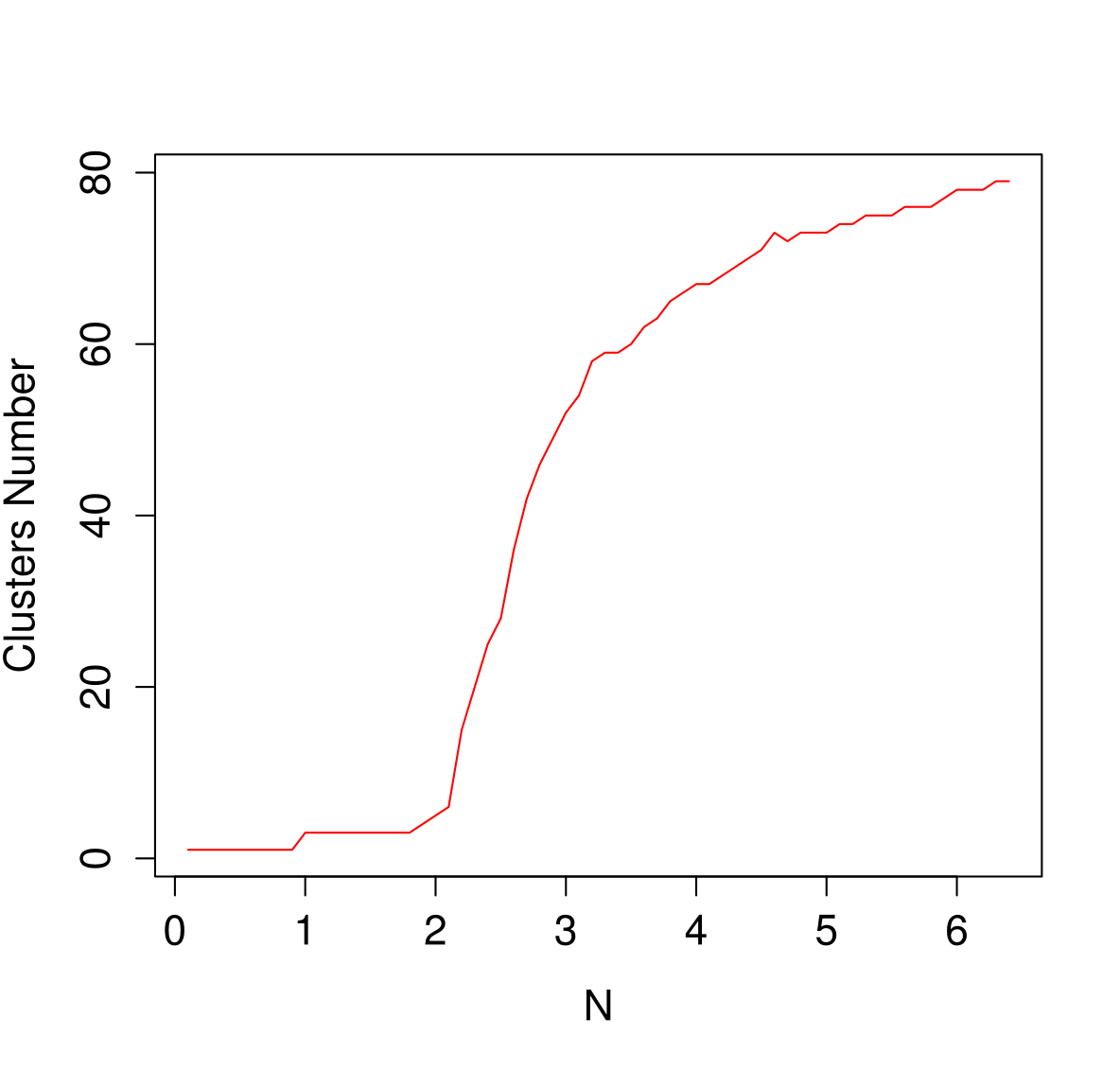}
	\caption{{\bf Clusters detection.} The influence of the value of parameter $N$ on the resulted number of clusters. Maximal number of clusters was set to 100.}
	\label{fig:dimCl} 
\end{figure}

\begin{figure}[t]
\centering
\subfigure[Wards k-means clustering with $k=3$.]{\label{env2:a}\includegraphics[width=1.55in]{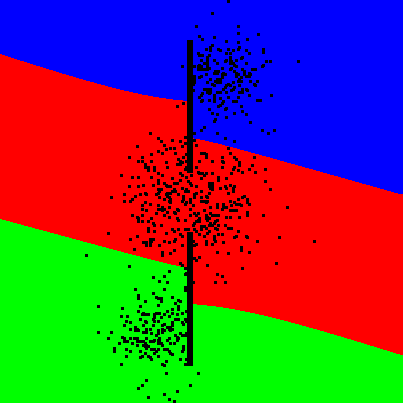}} \quad
\subfigure[{\sWards} clustering started with 10 initial clusters.]{\label{env2:b}\includegraphics[width=1.55in]{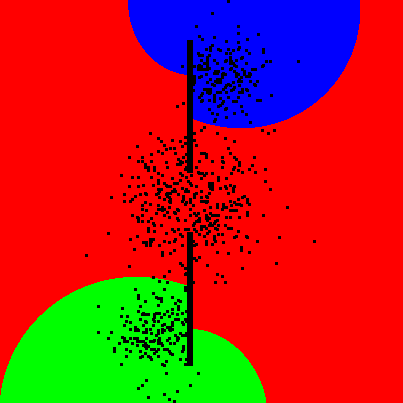}} \quad
	\caption{{\bf Populations districts on the space with barriers.} Voronoi diagrams constructed by Wards k-means and {\sWards} on a data space with barriers containing three populations generated from random walk models.}
	\label{fig:env2} 
\end{figure}

\begin{figure}[t]
\centering
\includegraphics[trim=0cm 0.6cm 0cm 1.8cm, clip=true, width=3.0in]{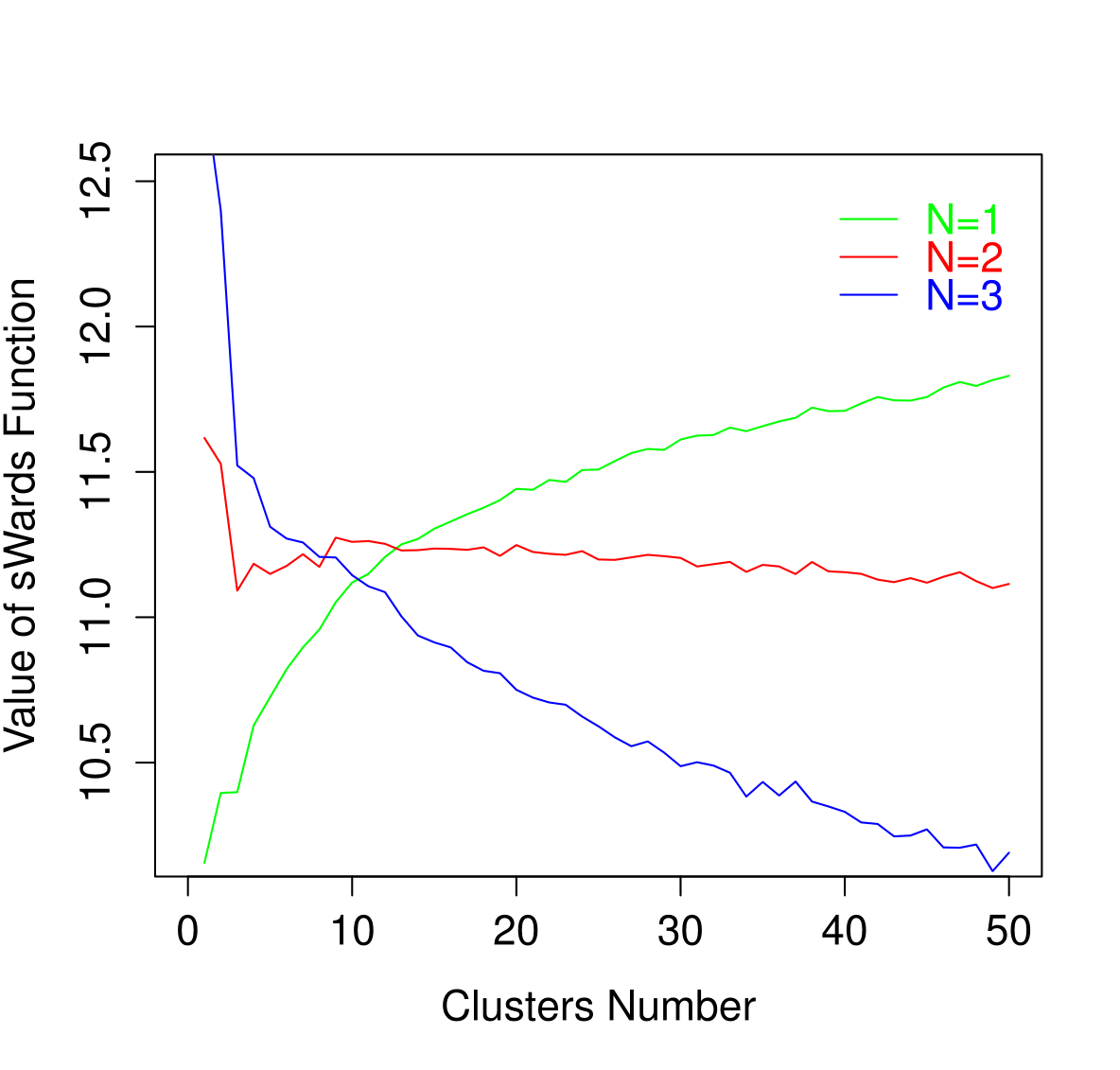}
	\caption{{\bf Shape of criterion function.} The influence of the clusters number on the shape of {\sWards} criterion function for three exemplary values of $N$.}
	\label{fig:clCost} 
\end{figure}

\begin{figure}[t]
\centering
\subfigure[Wards k-means clustering with $k=2$.]{\label{env1:a}\includegraphics[width=1.55in]{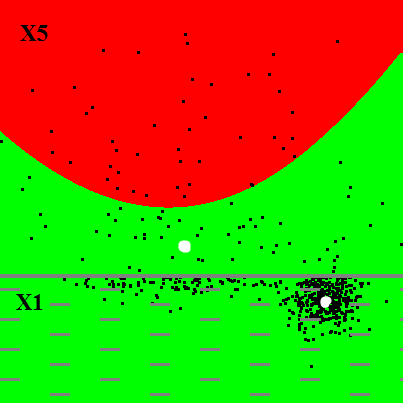}} \quad
\subfigure[{\sWards} clustering started with 10 initial clusters.]{\label{env1:b}\includegraphics[width=1.55in]{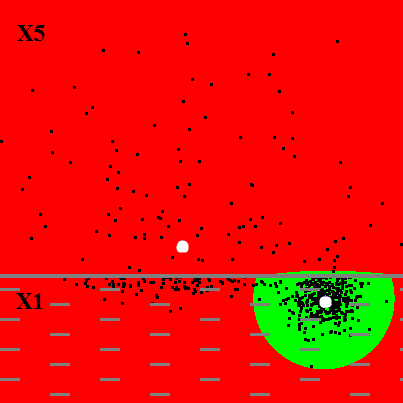}} \quad
	\caption{{\bf Populations districts on the space with regions.} Voronoi diagrams constructed by Wards k-means and {\sWards} on a data space divided into two regions $X_1$ and $X_5$ containing three populations generated from random walk models. The speed of movements in $X_5$ is 5 times higher than in $X_1$.}
	\label{fig:env1} 
\end{figure}

\begin{table*}[t]
\caption{{\bf UCI evaluation. }Comparison of clustering results (measured by Rand index) of UCI data sets between {\sWards}, Wards k-means and Spectral Clustering for Euclidean and RBF dissimilarities. The estimated numbers of clusters (Est. Cl.) by {\sWards} were used for other algorithms. The MLE was applied for setting the parameter $N$.}
\label{UCI}
\vskip 0.15in
\begin{center}
\begin{small}
\begin{sc}
\begin{tabular}{lcc|cccc|cccr}
\hline
%\abovespace\belowspace
 & & & \multicolumn{4}{c|}{Euclidean dissimilarity} & \multicolumn{4}{c}{RBF dissimilarity}\\
Data & True Cl. & $N$ & Est. cl. & {\sWards} & k-means & Specc & Est. cl. & {\sWards} & k-means & Specc\\
\hline
%\abovespace
Cmc & 3 & 2.64 & 4 & {\bf 0.61} & 0.57 & 0.58 & 2 & {\bf 0.55} & 0.51 & 0.51 \\
Ecoli & 8 & 3.72 & 9 & {\bf0.88} & 0.83 & 0.77 & 10 & {\bf0.84} & 0.79 & 0.8 \\
Glass & 7 & 3.07 & 8  & {\bf0.71} & 0.7 & 0.68 & 7  & 0.7 & {\bf0.71} & {\bf0.71} \\
Hayes-r. & 3 & 1.85 & 5 & {\bf 0.62} & 0.58 & 0.59 & 5 & {\bf 0.61} & 0.5 & 0.6 \\
Ionosph. & 2 & 5.03 & 4 & 0.55 & 0.52 & {\bf 0.61} & 4 & 0.57 & {\bf 0.61} & 0.58 \\
Iris & 3 & 2.49 & 4 & {\bf0.85} & 0.81 & 0.83 & 5 & {\bf0.85} & 0.84 & 0.83 \\
Tae & 3 & 2.06 & 6 & {\bf 0.61} & {\bf 0.61} & 0.6 & 5 & {\bf0.62} & 0.58 & 0.6 \\
Wine &3 & 1.64 & 4 & {\bf 0.75} & 0.63 & 0.68 & 5 & {\bf 0.58 }& 0.55 & 0.55\\
%\belowspace
Yeast & 10 & 4.81 & 11 & 0.64 & {\bf0.73} & {\bf0.73} & 10 & 0.63 & {\bf0.73} & {\bf0.73}\\
\hline
\end{tabular}
\end{sc}
\end{small}
\end{center}
\vskip -0.1in
\end{table*}

\begin{example} {\bf Clusters number detection}

Let us first examine the impact of the value of parameter $N$ on the detection of the resultant number of groups. For this purpose a mouse-like set (see Figure \ref{fig:mouse}) was clustered with different values of $N$ starting from 100 initial groups. The resultant number of groups are illustrated in the Figure \ref{fig:dimCl}.

The immediate observation is that the increase of the value of $N$ results in the increase of the detected number of groups. One can observe that for $N < 1$ the entire data set was recognized as one group. For $N \in (1,2)$ the mouse-like set was partitioned into three groups which seems to be the most appropriate partitioning. For $N > 2$ the number of groups began to grow rapidly.
\end{example}

\begin{example} {\bf Shape of criterion function}

To get more insight into the influence of dimension parameter on the discovered number of clusters, we analyzed the shape of {\sWards} criterion function for different values of $N$. Since the {\sWards} automatically reduces unnecessary clusters, it is not possible to directly specify the number of groups. Therefore, a mouse-like data set (see Figure \ref{fig:mouse}) was first partitioned into expected number of groups with use of k-means. Then, the {\sWards} criterion function was calculated for each partition. 

It is clear from Figure \ref{fig:clCost} that the criterion function yields a global minimum for 3 clusters when $N=2$. Therefore, in most cases the algorithm ends with 3 groups. For $N=1$ the cost of maintaining clusters increases and the algorithm generally includes all elements into one group. The function is decreasing for $N=3$. It means that the method rarely reduces clusters. The last case can be a very useful variant of {\sWards} when the resulting number of groups should not be discovered by the algorithm but specified directly by the user.
\end{example}

\subsection{Applications}

In this section we show that the proposed method is very useful in the analysis of biological models of populations. It is assumed that a population follows a random walk model $P(x,n,t)$ on a plane \cite{codling2008random}, where at each unit of time an instance moves randomly in one of four directions: left, right, up or down. More precisely, given a starting point (seed) $x \in X$, $n$-instances are generated from a random walk model assuming $t$ time units. It is worth to mention that a probability distribution of a population converges to spherical Gaussian one \cite{codling2008random}. Given a data set consisting of $k$ populations we would like to discover them during a clustering process. Constructed Voronoi diagram determines the corresponding populations districts in the whole space.

Let us observe that, in practice the environment does not represent an Euclidean space. Indeed, a plane is usually crossed by rivers and barriers. Moreover, the environment can be divided into various regions, e.g. meadows, seas, forests etc., which changes the speed of movement of individuals. These modifications change the classical Euclidean metric -- the distance between elements has to take into account all the aforementioned circumstances. In the experiments we analyze two cases of populations environments.

\begin{example} \label{biloEx1} {\bf Environment with barriers.}
Let us consider three populations living in the environment showed in Figure \ref{fig:env2} crossed by two barriers which modify Euclidean distance function. Basically, the distance between elements located on the opposite sides of barrier is calculated as a shortest path which does not cross the barrier. 

Regions occupied by populations can be obtained with use of Voronoi diagram. Is is clear from Figure \ref{env2:a} that Wards k-means discovered populations districts as horizontal stripes which is not an appropriate model. More accurate partition results from {\sWards} (see Figure \ref{env2:b}), where detected regions form circular shapes. Partitions agreement measured by Rand index \cite{rand1971objective} for Wards k-means equals $96\%$ while for {\sWards} is $98\%$.
\end{example}

\begin{example} \label{biloEx2} {\bf Environment with regions.}
In the second example let us assume that a data space $X$ is divided into two regions $X_1$ and $X_5$. In $X_5$ the individuals moves 5 times faster than in $X_1$. This inducts a dissimilarity measure on $X$ by:
$$
d(x,y):=\left\{
\begin{array}{l}
	d_E(x,y), \,\,\,\,\,\,\,\,\,\,\,\, x,y \in X_5, \\[0.6ex]
	5 d_E(x,y), \,\,\,\,\,\,\,\,\, x,y \in X_1, \\[0.6ex] 
	\inf\{5 d_E(x,z) + d_E(z,y): \text{border point } z\}, \\[0.4ex]
	 \,\,\,\,\,\,\,\,\,\,\,\, \,\,\,\,\,\,\,\,\,\,\,\, \,\,\,\,\,\,\,\,\,\,\,\,x \in X_1, y \in X_5,
\end{array}
\right.
$$
where $d_E(\cdot,\cdot)$ denotes the Euclidean distance. We consider two populations showed in Figure \ref{fig:env1} with starting points marked with white dots in $X_1$ and $X_5$ respectively. 

One can observe in Figure \ref{env1:b} that despite the form of the above dissimilarity measure, {\sWards} detected the circular-like districts of populations very well. This result can be compared with k-means clustering (see Figure \ref{env1:a}) where a produced partition does not match populations distributions. The value of Rand index for {\sWards} equals $92\%$ while for Wards k-means is $61\%$.
\end{example}

\subsection{Evaluation}

After establishing the properties as well as demonstrating basic capabilities and potential applications of introduced method we present a short evaluation. We carried out the experiments on selected UCI data sets \cite{asuncion2007uci}. In all experiments the initial number of clusters for {\sWards} was fixed two times higher than the actual number of groups. In order to provide the correspondence between the clustering results the other examined methods assumed the number of groups returned by {\sWards} as the input clusters number.

As a measure of agreements between partitions the Rand index (RI) was used \cite{rand1971objective}.  It is defined as a ratio between pairs of true positives and false negatives, and all pairs of examples. The values close to $1$ indicate that two partitions are very similar. MLE was used to calculate the optimal value of parameter $N$. Two kinds of dissimilarity measures were considered: the Euclidean distance and the dissimilarity determined by the Gaussian radial basis function (RBF). The value of sigma for RBF was estimated as a median of the squared distances between all pairs of data set elements \cite{caputo2002appearance}.

The results presented in Table \ref{UCI} show that {\sWards} reasonably well determined the final number of groups. The advantage of our method over k-means and Spectral Clustering is the most evident for the case of Ecoli data set and Euclidean distance. The worst results were obtained for Ionosphere data set. The use of RBF similarity rarely improved the accuracy of clustering. It could be caused by the fact that it is very difficult to set the optimal value for RBF sigma parameter in particular situation.

To extend the above evaluation, in the Figure \ref{fig:dimUci} we present the clustering accuracies of UCI data sets for a wide range of dimension parameter $N \in (0,10)$. One can observe that in most cases the best results were obtained when $N$ was estimated as a dimension of data. The exceptions are Glass and Yeast data sets where a slight improvement was achieved for higher values of $N$.

\begin{figure}[t]
\centering
\includegraphics[trim=0cm 0.6cm 0cm 2cm, clip=true, width=3.0in]{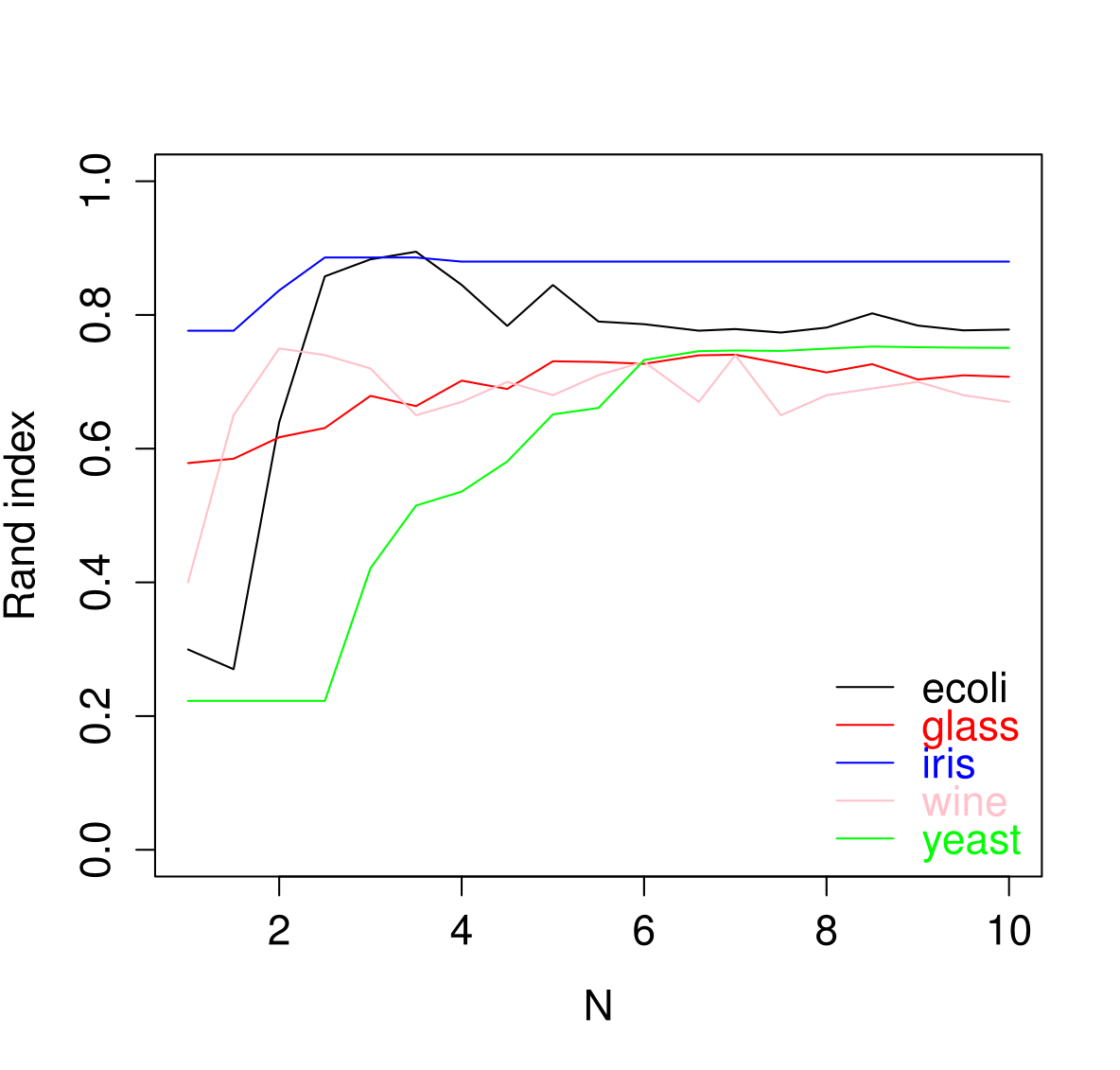}
	\caption{{\bf UCI evaluation.} The influence of the value of parameter $N$ on the clustering results of UCI data sets.}
	\label{fig:dimUci} 
\end{figure}

\section{Conclusion}

In this paper a generalization of spherical Cross-Entropy Clustering to non Euclidean spaces was presented. The proposed method uses a Wards approach to modify the cross-entropy criterion function for the case of arbitrary data sets. In consequence, obtained method allows for partitioning of non vector data into spherically-shaped clusters of arbitrary sizes. It is scale invariant technique which detects the final number of groups automatically. Our method works in comparable time to generalized Wards method while the clustering effects are similar to those produced by GMM when focusing on spherical Gaussian distributions in Euclidean spaces. 

Moreover, we generalized the notion of Voronoi diagram for the case of arbitrary criterion function based on Wards approach. This leads to identical results in the case of classical methods as k-means while it allows for formal division of data space when focusing on non Euclidean methods as {\sWards}.

\section*{Acknowledgment}

This work was partially funded by the Polish Ministry of Science and Higher Education from the budget for science in the years 2013--2015, Grant no. IP2012 055972 and by the National Science Centre (Poland), Grant No. 2014/13/B/ST6/01792.

% trigger a \newpage just before the given reference
% number - used to balance the columns on the last page
% adjust value as needed - may need to be readjusted if
% the document is modified later
%\IEEEtriggeratref{8}
% The "triggered" command can be changed if desired:
%\IEEEtriggercmd{\enlargethispage{-5in}}

% references section

% can use a bibliography generated by BibTeX as a .bbl file
% BibTeX documentation can be easily obtained at:
% http://www.ctan.org/tex-archive/biblio/bibtex/contrib/doc/
% The IEEEtran BibTeX style support page is at:
% http://www.michaelshell.org/tex/ieeetran/bibtex/
\bibliographystyle{IEEEtran}
% argument is your BibTeX string definitions and bibliography database(s)
\bibliography{IEEEabrv,example_paper}
%
% <OR> manually copy in the resultant .bbl file
% set second argument of \begin to the number of references
% (used to reserve space for the reference number labels box)

% that's all folks
\end{document}